\newcommand{\BB}{\text{BERT}_{\text{BASE}}}
\newcommand{\BS}{\text{BERT}_{\text{SMALL}}}
\newtheorem{theorem}{Theorem}
\newtheorem*{lemma*}{Lemma}
\newtheorem*{theorem*}{Theorem}
\newtheorem*{assumption*}{Assumption}
\newtheorem*{corollary*}{Corollary}
\newtheorem*{remark*}{Remark}
\newtheorem*{definition*}{Definition}
\newcommand{\dietrel}{\textsc{Diet-Rel}}
\newcommand{\dietabs}{\textsc{Diet-Abs}}
\newcommand{\dietlin}{$\dietabs{}^{LIN}$}
\newcommand{\ours}{\textsc{Diet-Rel}}
\newcommand{\absscalar}{\textsc{Diet-Abs}}
\def\1{\bm{1}}
\def\rmA{{\mathbf{A}}}
\def\rmE{{\mathbf{E}}}
\def\rmP{{\mathbf{P}}}
\def\rmR{{\mathbf{R}}}
\def\rmS{{\mathbf{S}}}
\def\rmW{{\mathbf{W}}}
\def\rmX{{\mathbf{X}}}
\def\mP{{\bm{P}}}
\def\mW{{\bm{W}}}
\DeclareMathAlphabet{\mathsfit}{\encodingdefault}{\sfdefault}{m}{sl}
\SetMathAlphabet{\mathsfit}{bold}{\encodingdefault}{\sfdefault}{bx}{n}
\newcommand{\R}{\mathbb{R}}
\newcommand{\insertDietAbsViz}{
\begin{figure}[th]
\small
\centering
\includegraphics[width=6.5cm]{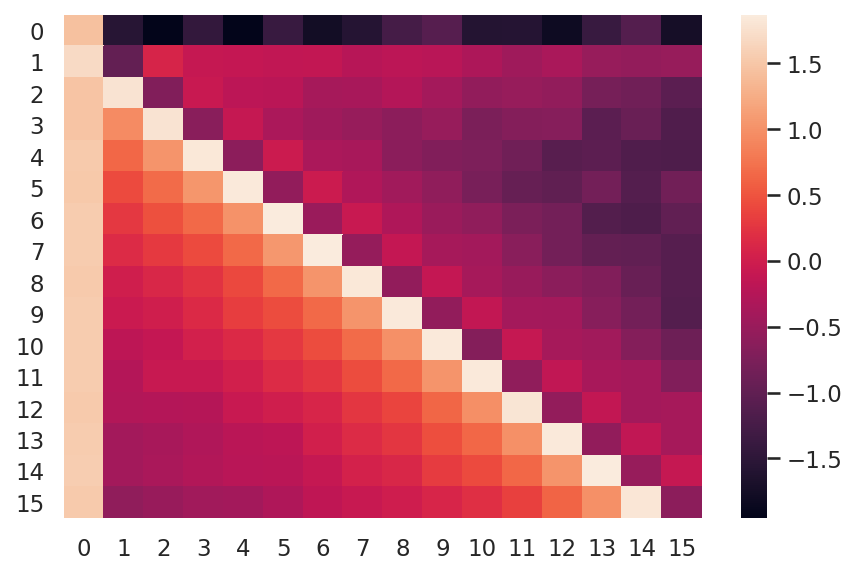}
\caption{Visualization of learned positional attention patterns of \dietabs{}. Note that in addition to capturing the the relative positional relations, the model also learn to attend to [CLS] at index 0, suggesting the dedicated [CLS] untying design in \citet{ke2020rethinking} is not necessary with \dietabs{}.}
\label{fig:attend_to_cls}
\vspace{-0.2in}
\end{figure}
}
\newcommand{\insertGLUE}{
\begin{table*}[t]
\small
    \begin{center}
    \begin{tabular}{l | c | c | cccccc | c}
    \toprule
    \multirow{2}{*}{Model} &
    \multirow{2}{*}{Position} &
    \multirow{2}{*}{Segment} & \bf MNLI & \bf QQP & \bf QNLI & \bf SST2 & \bf CoLA &\bf STS-B & \bf \multirow{2}{*}{Avg} \\
    & & & 393k & 364k & 105k & 67k & 8.5k & 7k \\
   \midrule 
    \citet{devlin2018bert} & input & input & 85.8 / 85.9 & 91.1 & 89.9 &  93.2 & 58.7 & 89.0 & 84.8 \\
    \citet{shaw2018self} & per-head & input & 86.3 / 86.0 &  91.2 & 90.5 &  93.2 & 59.8 & 89.3 & 85.2 \\
    \citet{raffel2020exploring} & per-head & input & 86.4 / 86.2 &  91.2 & 90.1 & 93.0 & 59.6 &  90.1 &  85.2\\
    \citet{ke2020rethinking} & per-head & input & 86.1 / 86.2 &  91.2 &  90.3 & 93.1 & 59.6 &  89.6 & 85.2\\
    \dietrel  &  per-head & input & 86.0 / 86.1 & 91.0 & 89.8  & 92.8  & 59.6 & 89.0 & 84.9 \\
    \dietrel{} &  per-head & per-head & 86.3 / 86.3 & 91.0 & 90.5 & 92.9 & 60.3& 89.3 & 85.2 \\
    \dietabs{} ($d_p$=128) & per-head & per-head & 86.7 / 86.5  &  91.2 &  90.6 & 92.8  & 60.1 & 89.4 & 85.3 \\
    \dietabs{} ($d_p$=128, share) & per-head & per-head & 86.4 / 86.4 & 90.8	& 89.5	& 93.0	& 59.8	& 90.2 & 85.2 \\ 

    \midrule
    \citet{wang2020linformer} ($d_p$=32) & input & input &82.3 / 82.6&90.2&86.3&91.4&53.9&87.6 & 82.0 \\
    $\absscalar{}^{LIN}$ ($d_p$=32) & per-head & input &83.0 / 83.1&90.6&86.7&92.0&55.7&87.6&82.7\\
    \bottomrule
    \end{tabular}
    \end{center}
    \vspace{-0.3cm}
    \caption{
    GLUE: Results on the GLUE dev set of the finetuned models based on a pre-trained model with 12-layer $\BB$ architecture. We report the median of the maximum accuracy over all checkpoints among five runs. We notice that the shared \absscalar{} with rank 128 performs competitively  with existing relative positional embedding SoTA models without the inductive bias of the relative positions. The proposed method also improves performance in the low-rank long range transformer setting of ~\citep{wang2020linformer}, where relative positional embedding approaches are inefficient to use.}
    \label{tab:glue}
\end{table*}
}
\newcommand{\insertXTREMEall}{
\begin{table*}[t]
\smaller
    \begin{center}
        \begin{tabular}{l | c| c | c | ccc| c}
            \toprule
             \multirow{3}{*}{Model} &
             \multirow{3}{*}{Position} &
             \multirow{3}{*}{Segment} &
             \multicolumn{1}{c|}{Classification} & \multicolumn{3}{c|}{Question Answering} & \multirow{3}{*}{\bf{Avg}} \\
             &&& \bf XNLI & \bf XQuAD & \bf MLQA & \bf TyDiQA & \\
             & & & 393k & \multicolumn{2}{c}{88k} & 3.7k & \\
             \midrule
            \citet{devlin2018bert} & input & input & 67.0 & 66.0 / 49.9 & 56.2 / 41.0 & 59.0 / 47.9 & 55.3\\
            \citet{shaw2018self} & per-head & input &  67.9 & 69.5 / 53.9 & 58.2 / 43.1 & 64.8 / 49.9 & 58.2 \\
            \citet{raffel2020exploring} & per-head & input &  68.5 & 69.9 / 53.5 & 59.5 / 44.3 & 63.8 / 50.6 & 58.6 \\
            \citet{ke2020rethinking} & per-head & input & 67.8 & 68.6	/ 52.0&	58.6 / 43.2	& 63.9 / 48.7 & 57.5 \\     
            \dietrel{} & per-head & input & 68.0 & 68.1 / 52.8 & 57.7 / 42.7 & 63.3 / 50.9 & 57.6\\
            \dietrel{} & per-head & per-head & 68.4 & 69.4 / 54.4 & 58.6 / 43.5 & 62.4 / 49.3 & 58.0 \\
           \dietabs{} ($d_p$=128, share) & per-head & per-head & 68.5 & 70.0 / 53.6 & 59.8 / 44.5 & 64.6 / 51.5 & 58.9 \\

            \midrule
            \citet{wang2020linformer} ($d_p$=256) & input & input & 63.6 & 59.1 / 43.7 & 48.9 / 34.0 & 50.5 / 37.9 & 48.2 \\
            \dietlin ($d_p$=256) & per-head & input & 64.4 & 61.6 / 46.0 & 52.2 / 37.0 & 53.6 / 40.9 & 50.8 \\
            \bottomrule
        \end{tabular}%
        \caption{XTREME: Fine-tune cross-lingual model on English training set (Cross-lingual Transfer). Performance is measured by \emph{accuracy} for classification, and \emph{f1 score / exact match} for question answering. In agreement with results in Table \ref{tab:glue} we see in this table that using per-head position encodings is strictly better than absolute position encodings at the input. With layer-wise sharing, \dietabs{} with rank 128 outperforms all SoTA models.}
    \label{tab:xtreme}
    \end{center}
  \vspace{-0.5cm}
\end{table*}
}
\newcommand{\insertXTREMEscalarsharing}{
\begin{table*}[t]
\small
    \begin{center}
        \begin{tabular}{l | cc | c | ccc | c}
            \toprule
             \multirow{2}{*}{Model} &
             \multirow{2}{*}{Sharing} &
             \multirow{2}{*}{Segment} &
             \multicolumn{1}{c|}{Classification} & \multicolumn{3}{c|}{Question Answering} &   \multirow{2}{*}{\bf{Avg}}\\
             & & & \bf XNLI & \bf XQuAD & \bf MLQA & \bf TyDiQA-GoldP \\
            \midrule
            \dietrel{} & - & input & 68.0 & 68.1 / 52.8 & 57.7 / 42.7 & 63.3 / 50.9 & 57.6\\
            \dietrel{} & head-wise & input & 67.7 & 66.2 / 51.0 & 56.0 / 41.1 & 60.1 / 45.9 & 55.4\\
            \dietrel{} & layer-wise & input & 68.0 & 68.6 / 53.3 & 58.1 / 43.1 & 61.3 / 48.2 & 57.2 \\

           \dietrel & - & per-head & 68.4 & 69.4 / 54.4 & 58.6 / 43.5 & 62.4 / 49.3 & 58.0 \\
            \dietrel{} & head-wise & per-head & 67.8 & 66.0 / 50.5 & 55.5 / 40.4 & 59.2 / 44.6 & 54.7 \\
            \dietrel{} & layer-wise  & per-head & 68.1 & 68.7 / 53.8 & 58.4 / 43.2 & 61.0 / 48.4 & 57.3 \\

            \midrule
            \dietabs{} ($d_p$=64) & - & input & 68.0 & 67.4 / 50.5 & 57.8 / 42.3 & 61.3 / 46.8 & 56.3 \\
            \dietabs{} ($d_p$=64) & - & per-head & 67.9 & 67.5 / 52.4 & 57.3 / 42.3 & 61.6 / 46.8 & 56.5 \\
            \dietabs{} ($d_p$=128) & - & per-head & 68.1 & 68.2 / 52.0 & 57.9 / 42.6 & 61.5 / 47.6 & 56.8 \\
            \dietabs{} ($d_p$=512) & - & per-head & 68.5 & 68.0 / 52.0 & 57.7 / 42.4 & 61.6 / 48.4 & 56.9  \\
           \dietabs{} ($d_p$=64) & layer-wise & input & 68.0 & 69.3 / 53.1 & 59.3 / 43.9 & 63.2 / 48.6 & 57.9 \\
           \dietabs{} ($d_p$=64) & layer-wise & per-head & 68.4 & 69.3 / 53.2 & 59.4 / 44.1 & 63.3 / 48.6 & 58.0 \\

           \dietabs{} ($d_p$=128) & layer-wise & per-head & 68.5 & 70.0 / 53.6 & 59.8 / 44.5 & 64.6 / 51.5 & 58.9 \\
           \dietabs{} ($d_p$=256) & layer-wise & per-head & 68.4 & 69.9 / 53.8 & 59.6 / 44.2 & 62.8 / 49.1 & 58.3 \\
           \dietabs{} ($d_p$=512) & layer-wise & per-head & 67.8 & 69.0 / 53.2 & 58.4 / 43.0 & 62.5 / 48.8 & 57.5 \\
            \bottomrule
        \end{tabular}%
        \caption{Ablation study on XTREME: We run decoupled positional attention ablation study to understand the effect of 1) sharing positional attention parameters across layers and heads 2) segment attention added at per-head 3) performance of relative and absolute 4) absolute positional attention rank $\it{d_p}$ from 64 to 512.}
    \label{tab:xtreme-scalar-sharing}
    \end{center}
\vspace{-0.1in}
\end{table*}
}
\newcommand{\insertParameters}{
\begin{table*}[t]
\small
    \begin{center}
    \begin{tabular}{l | ccc | ccc}
      \toprule
      & \multicolumn{3}{c|}{\bf English} & \multicolumn{3}{c}{\bf Multilingual} \\
      & \multicolumn{1}{c}{Parameters} & \multicolumn{1}{c}{$+\Delta$} & \multicolumn{1}{c|}{GLUE} & \multicolumn{1}{c}{Parameters} & \multicolumn{1}{c}{$+\Delta$} & \multicolumn{1}{c}{XTREME} \\
      \midrule
      \citet{devlin2018bert} & 110.1M & - & 84.8 & 178.9M & - & 55.3 \\
      \citet{shaw2018self} & 112.9M & +2.5\% & 85.2 & 181.7M & +1.7\% &  58.2 \\
      \dietrel{} &  109.9M & +0.0\% & 85.2 & 178.7M & +0.0\%  & 58.0 \\
      \dietrel{} (share) & 109.7M & +0.0\% & 85.0 & 178.5M & +0.0\% & 57.3 \\
      \dietabs{} ($d_p$=128) & 128.6M & +16.8\% & 85.3 & 197.4M & +10.0\%  & 56.8 \\
      \dietabs{} ($d_p$=128, share) & 111.3M & +1.1\% & 85.2 & 180.1M & +0.6\% & 58.9 \\
      \bottomrule
    \end{tabular}
    \end{center}
  \vspace{-0.3cm}
    \caption{Model Parameters: We list the number of model parameters and performance for different position encoding approaches. We observe that sharing hurts the performance of \dietrel{} with negligible benefit in the number of parameters. On the contrary, the regularization effect of sharing makes \dietabs{} more stable with lesser parameters to achieve competitive performance.}
    \label{tab:parameters}
\end{table*}
}
\newcommand{\insertPerformance}{
\begin{table*}[t]
\small
    \begin{center}
    \begin{tabular}{l|ccccc}
    & Mode & \citet{shaw2018self} & \citet{ke2020rethinking} & \absscalar{} & \ours\\
    \toprule
    $\BB$ & Training& +13\% & +1\% & +0\% & +0\%\\
    $\BB$ &Inference & +33\% & +19\% & +0\% & +0\%\\
    $\BS$ &Training & +24\% & +4\% & +0\% & +0\%\\
    $\BS$ &Inference& +65\% & +27\% & +1\% & +0\%\\
    \bottomrule
    \end{tabular}
    \end{center}
  \vspace{-0.3cm}
    \caption{Pre-training and inference time  of Transformers with different position encoding methods in comparison to the baseline BERT model on TPU v2. We observe that simplicity of the \ours{} and \absscalar{} result in substantial gains in both training and inference time. We notice even more speedup for the smaller $\BS$ model compared to $\BB$.}
    \label{tab:performance}
\vspace{-0.2in}
\end{table*}
}
\newcommand{\insertTranslate}{
\begin{table}[t]
\small
    \begin{center}
    \begin{tabular}{l@{\hspace{4pt}}  |  @{\hspace{4pt}}c@{\hspace{4pt}}c@{\hspace{4pt}}c@{\hspace{4pt}}c}
    \toprule
    Model & \bf EN-DE & \bf DE-EN & \bf EN-CS & \bf CS-EN \\
   \midrule 
    \citet{vaswani2017attention} & 39.00 & 38.42 & 18.55 & 22.93 \\
    \citet{shaw2018self} &  40.10 & 38.90 & 18.74 & 23.89 \\
    \ours  & 39.47 & 38.49 & 18.68 & 23.93 \\
    \bottomrule
    \end{tabular}
    \end{center}
  \vspace{-0.3cm}
    \caption{Machine Translation: We report results comparing different position encoding methods for Transformers on machine translation tasks en-de, de-en, en-cs and cs-en from the Newstest 2018 dataset. We notice that all per-head position encoding schemes (all except the first row) do better than the absolute position embeddings added at the input. Further the proposed simple \ours{} approach is competitive with other position encoding approaches.
    }
    \label{tab:translate}
\end{table}
}
\title{A Simple and Effective Positional Encoding for Transformers}
\author{Pu-Chin Chen\thanks{\quad The authors contribute equally to this paper. Corresponding author email: puchin@google.com} , 
Henry Tsai\footnotemark[1] , 
Srinadh Bhojanapalli\footnotemark[1] , \\
\textbf{Hyung Won Chung, Yin-Wen Chang, Chun-Sung Ferng} \\ \\
Google Research}
\begin{document}
\maketitle
\begin{abstract}
Transformer models are permutation equivariant. To supply the order and type information of the input tokens, position and segment embeddings are usually added to the input. Recent works proposed variations of positional encodings with relative position encodings achieving better performance. Our analysis shows that the gain actually comes from moving positional information to attention layer from the input. Motivated by this, we introduce \textbf{D}ecoupled pos\textbf{I}tional att\textbf{E}ntion for \textbf{T}ransformers (DIET), a simple yet effective mechanism to encode position and segment information into the Transformer models. The proposed method has faster training and inference time, while achieving competitive performance on GLUE, XTREME and WMT benchmarks. We further generalize our method to long-range transformers and show performance gain.
\end{abstract}

\section{Introduction}
Transformers are sequence-to-sequence models that achieve state of the art performance in many Natural Language Processing (NLP) tasks, such as machine translation, language modeling and question answering \citep{vaswani2017attention, devlin2018bert, xlnet2019, liu2020deep}. Transformers have two major components: self-attention and a position-wise feed forward layer. Both are permutation equivariant and are not sensitive to the order of input tokens. To make these models position-aware, the position information of the input words is typically added as an additional embedding to the input token embeddings \citep{vaswani2017attention}. For example, input embedding ($\mW$) of a sentence is added to the position embeddings ($\mP$), resulting in input $\mW + \mP$ to the Transformer. These position embeddings only depend on the location the word appears. For multi-segment tasks, additional segment embeddings can be added just like the position embeddings \citep{devlin2018bert}.

There have been multiple works exploring different ways to include position information in Transformers \cite{shaw2018self, xlnet2019, raffel2020exploring}. Many of those note the advantages of using a relative position encoding scheme over absolute position encodings (see also Fig~\ref{fig:bar_plot}). However what causes this difference is not clear. \citet{yun2019transformers} have shown that Transformers with absolute position encodings are universal approximators of all sequence to sequence functions, proving that absolute position encodings can capture the position information. Hence what causes the superiority of relative position encodings? A systematic study and understanding of the benefits and drawbacks of different position encoding methods is missing. \citet{ke2020rethinking} hypothesised that the cross correlation between word and position embeddings while computing attention could be the cause of poor performance of absolute position encodings. However such cross terms are present in some of the relative position encoding methods \citep{shaw2018self, xlnet2019}, and these methods perform on par  or better than the other position encoding schemes (see \S \ref{sec:experiments}).

\begin{figure*}[t]
    \centering
    \subfloat[\centering English Transfer Learning on MultiNLI]{{\includegraphics[scale=0.28]{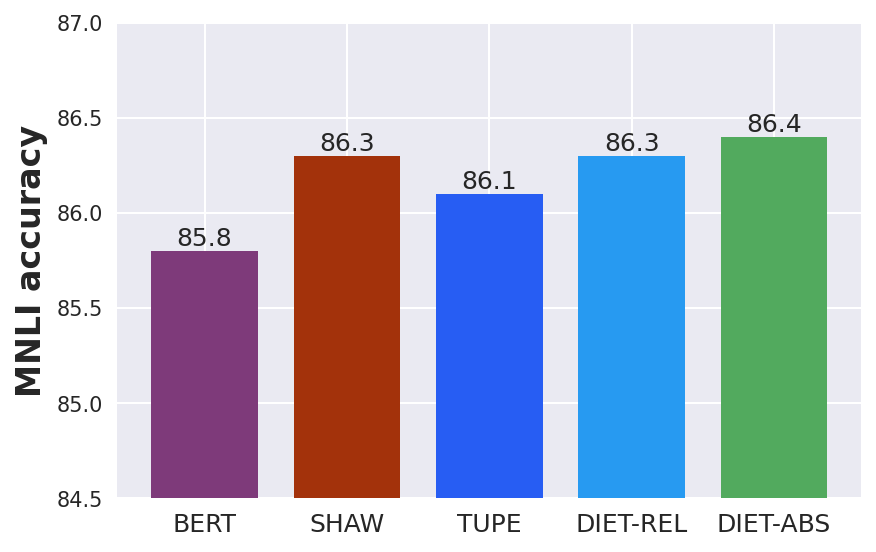} }}
    \qquad
    \subfloat[\centering Cross-lingual Transfer on XNLI]{{\includegraphics[scale=0.28]{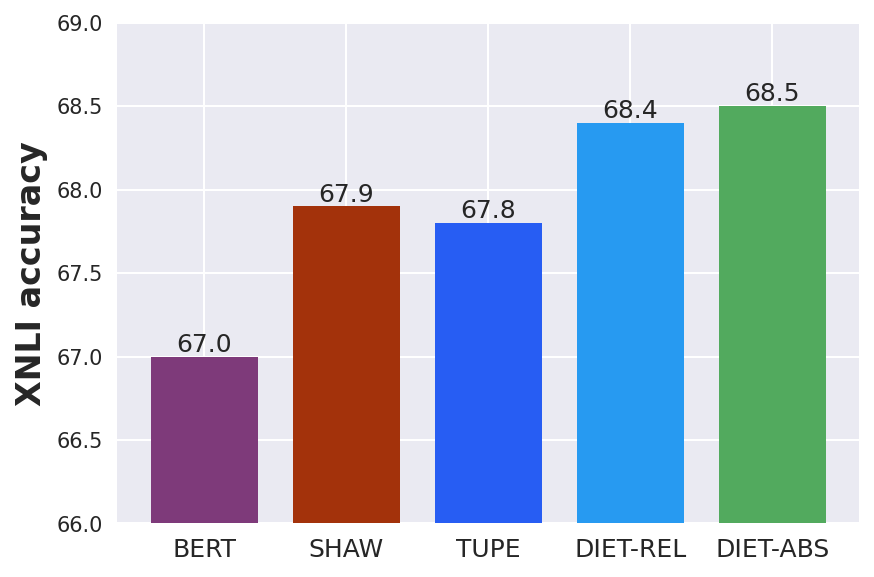} }}
    \qquad
    \subfloat[\centering Translation on CS-EN]{{\includegraphics[scale=0.28]{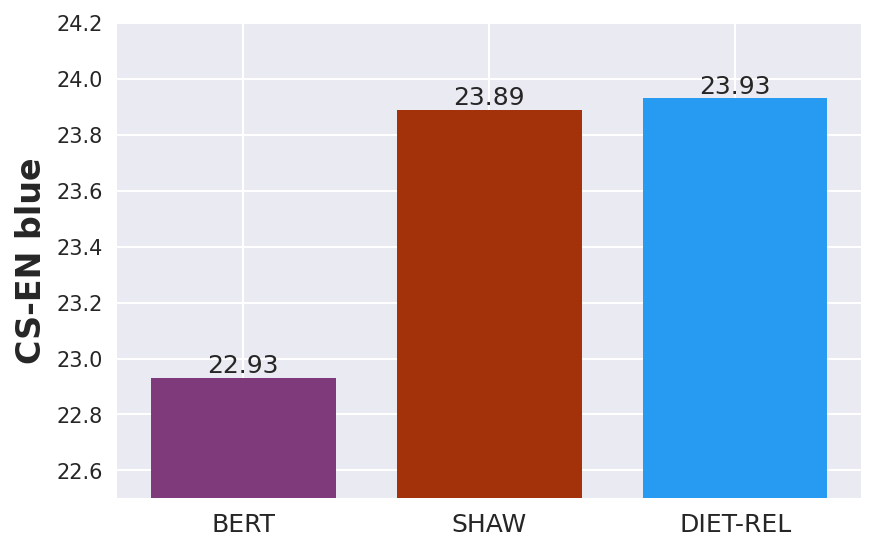} }}
    \caption{Performance effect of different positional encoding methods  for Transformers (see \S~\ref{sec:background}) on two Natural language Inference datasets from GLUE~\citep{wang2019glue}, XTREME~\citep{icml2020_4220} and one Neural Machine Translation dataset  WMT 18~\citep{bojar-etal-2018-findings}. Absolute positional encoding (\dietabs{}) can achieve better performance than the relative counterpart (\dietrel{}), showing the importance of designing the right position encoding method. }
    \label{fig:bar_plot}
\end{figure*}
In this paper we undertake a systematic study to understand different position encoding methods. We argue that absolute position embeddings mainly suffer from being added at the input. We show, with our experiments on classification, question answering and machine translation tasks, that absolute position encodings added to attention matrices with different parameters for each head improves significantly over absolute position encodings added to the input. This highlights that where the position information is included in the Transformer is important, providing an explanation for the gap in performance between absolute and relative position encodings. We also compare different position encodings and the effect of sharing position encodings across different heads and layers of a Transformer. Based on these observations we propose decoupled positional attention and a new segment encoding approach (for tasks with multiple segments), and empirically show its superiority.

We summarize our contributions in this paper below.
\begin{itemize}
\setlength\itemsep{0em}
    \item We theoretically and empirically analyze the limitation of the absolute position embeddings added to the input. For both absolute and relative information, we show that encoding position to attention matrix per-head results in superior performance.
    \item We propose a simple and efficient way to encode position and segment information. The proposed encoding matches the SoTA methods on multiple standard NLP tasks while having a simpler model with lower training/inference costs.
    \item Our proposed method can be easily applied to long sequence models (\dietlin{}) and improve all metrics compared with Linformer \citep{wang2020linformer}.
    \item We present ablation studies comparing different position encoding methods and ways of sharing position encoding parameters across heads and layers in Transformer.  
\end{itemize}
\section{Position Encoding for Transformers} \label{sec:background}

In this section, we briefly review the Transformer models \citep{vaswani2017attention} and  discuss previous improvement of position encoding and analyze the limitation of the additive position embedding proposed in the initial and widely-adopted Transformer model. 

\subsection{Transformer}
A Transformer block consists of two types of layers: 1) Self-attention layer and 2) Feed forward layers. 

\paragraph{Self-Attention Module}
Given input sequence length $n$, hidden size $d$, multi-head query-key down-projection size $d_h$, we define hidden layer input to this attention head as
$\rmX \in \mathbb{R}^{n \times d}$, the query projection matrix as $\rmW_Q^i \in \mathbb{R}^{d \times d_h}$, the key projection matrix as $\rmW_K^i  \in \mathbb{R}^{d \times d_h}$ and the value projection matrix as $\rmW_V^i  \in \mathbb{R}^{d \times d_h}$, $i \in [h]$, for $h$ heads. Usually, $d_h < d$ as we do multi-head attention with a smaller representation per head ($d_h = d/h$). With that we can write dot-product attention score: \begin{gather*}
  \rmA^i =  ( \rmX \rmW_Q^i)( \rmX \rmW_K^i)^\top
\end{gather*} This attention score is used to compute the output for each head, after scaling and per row normalization using softmax:
\begin{gather*}
\label{eq:attention_output_matrix}
  \text{head}^i =  \text{Softmax}(\rmA^i/\sqrt{d}) \cdot (\rmX\rmW_V^i)
\end{gather*}

Output of all attention heads in a layer are concatenated and passed to the next feed-forward layer applied token-wise.

\subsection{Position Aware Self Attention}
Many NLP tasks, such as machine translation, language modeling, are sensitive to the ordering of input words. Since Transformers are permutation equivariant, we usually additionally include the position information in the input. Below we discuss some of the popular position encoding methods.

\subsubsection{Absolute Position Encodings}
Absolute position encodings are computed in the input layer and are summed with the input token embeddings. \citet{vaswani2017attention} proposed this for Transformers and it has been a popular choice in the followup works \citep{radford2018gpt, devlin2018bert}. There are two common variations of the absolute position encodings - fixed and learned.

\subsubsection{Relative Position Encodings}
One drawback of absolute position encoding is that it requires fixed length of input sequence and does not directly capture relative positions to each word. To solve these problems several relative positions schemes have been proposed.

\citet{shaw2018self} proposed using relative position encoding instead of absolute position encoding, and add position embeddings to the key and optionally value projections instead of the input. They show that this new way of encoding position information leads to better performance on machine translation tasks. \citet{xlnet2019} simplified this by removing the position embeddings in value projections and showed better performance on the language modeling tasks. Both these approaches use a vector representation to encode position information.

\citet{raffel2020exploring} use scalars to encode relative position between query and key indices and add directly to the attention scores matrix. They further use logarithmic binning of position information into a fixed number of buckets. All these relative position methods further share the position encoding parameters across layers.

Recently \citet{ke2020rethinking} hypothesised that the cross correlation between position and token embeddings can result in weaker performance of additive absolute position embeddings and instead proposed to add both absolute and relative positional information based attention directly in each head. However such cross terms are present in the method proposed by \citet{shaw2018self}, which does competitively with other approaches. We instead hypothesise that position encodings at input limit the rank of the position attention matrix leading to its poor performance.
\subsection{Limitations of the Input Additive Position Embedding}
\begin{figure}
    \centering
    \includegraphics[width=7.5cm]{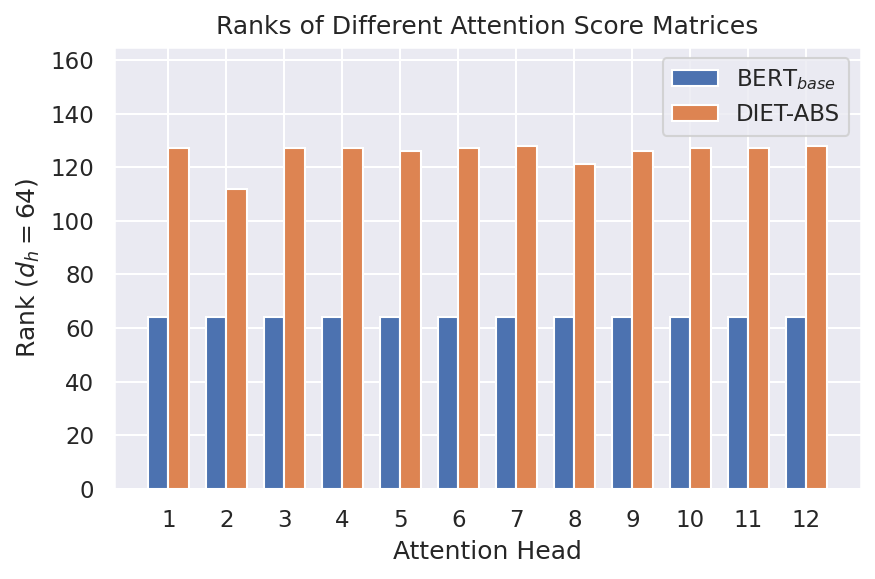}
    \caption{{Rank of attention matrices:} We present a comparison of the rank of  the attention score matrices of a $\BB$ model with absolute position embeddings at input v.s. absolute position embeddings per-head (\absscalar{}~\eqref{eq:absscalar}). With additive positional embedding at input, the attention matrices have much lower rank, limiting the representative power. This is alleviated by \absscalar{}.}
    \label{fig:bert_base_attention}
\end{figure}
In this section we discuss some limitations of the de facto way of adding absolute position encodings to the input token embeddings. 

We first compare the representation power in terms of the rank of attention matrices achievable with different position encodings.

\begin{theorem} \label{thm:rank}
Let $\rmP \in \R^{n \times d}$ be the input position embedding and $\hat{\rmP} \in \R^{n \times d_p}$ be the layer-wise position embeddings. Let $\rmW_Q, \rmW_K \in \R^{d \times d_h}$ be the query and key projection matrices with head projection size $d_h$, and $d_h < d_p, d$ and $n \geq d_h + d_p$. Let $\rmA_a = (\rmX+\rmP) \rmW_Q \rmW_K^\top (\rmX+\rmP)^\top$ and $\rmA_r = \rmX \rmW_Q \rmW_K^\top \rmX^\top + \hat{\rmP} \hat{\rmP}^\top $ be the attention matrices computed using input and layer-wise position embeddings respectively. Then for any $\rmX, \rmP, \rmW_Q, \rmW_K$ $$rank(\rmA_a) \leq d_h.$$ There exists a choice of $\rmX, \hat{\rmP}, \rmW_Q, \rmW_K$ such that $$rank(\rmA_r) = d_p+d_h > d_h.$$

\end{theorem}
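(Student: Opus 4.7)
}

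The plan is to handle the two claims separately. For the upper bound on $\mathrm{rank}(\rmA_a)$, I would factor $\rmA_a$ through the low-rank product $\rmW_Q \rmW_K^\top$. Setting $\rmY := \rmX + \rmP \in \R^{n \times d}$, we can write $\rmA_a = \rmY (\rmW_Q \rmW_K^\top) \rmY^\top$, and since $\rmW_Q, \rmW_K \in \R^{d \times d_h}$ the middle factor has rank at most $d_h$. Applying the standard inequality $\mathrm{rank}(ABC) \le \mathrm{rank}(B)$ then immediately gives $\mathrm{rank}(\rmA_a) \le d_h$, independently of how $\rmX$ and $\rmP$ are chosen. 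This is essentially a one-line argument.

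For the existence claim on $\rmA_r$, the plan is to produce an explicit construction that makes the two summands $\rmX \rmW_Q \rmW_K^\top \rmX^\top$ and $\hat{\rmP}\hat{\rmP}^\top$ have disjoint row/column supports, so that the rank of the sum equals the sum of the individual ranks. Concretely, I would pick $\hat{\rmP} \in \R^{n \times d_p}$ whose first $d_p$ rows are the identity $\mathrm{I}_{d_p}$ and whose remaining $n-d_p$ rows are zero, so that $\hat{\rmP}\hat{\rmP}^\top$ is a block matrix with $\mathrm{I}_{d_p}$ in the top-left block and zeros elsewhere, yielding rank exactly $d_p$. Symmetrically, I would take $\rmX \in \R^{n \times d}$ to have its first $d_p$ rows equal to zero and its remaining $n-d_p \ge d_h$ rows chosen so that the leading $d_h$ columns form a rank-$d_h$ submatrix. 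Choosing $\rmW_Q = \rmW_K = \begin{pmatrix} \mathrm{I}_{d_h} \\ 0 \end{pmatrix} \in \R^{d \times d_h}$ makes $\rmW_Q \rmW_K^\top$ equal to the projection onto the first $d_h$ coordinates, so $\rmX \rmW_Q \rmW_K^\top \rmX^\top$ is supported entirely in the bottom-right $(n-d_p) \times (n-d_p)$ block, with rank $d_h$.

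With these choices, $\rmA_r$ is a block-diagonal matrix whose top-left block has rank $d_p$ and whose bottom-right block has rank $d_h$, so $\mathrm{rank}(\rmA_r) = d_p + d_h$. The hypothesis $n \ge d_p + d_h$ is exactly what is needed to fit both nonzero blocks without overlap, and the hypothesis $d_h < d_p, d$ ensures the projection matrices $\rmW_Q, \rmW_K$ have the required shape. The main subtlety, and the only place requiring care, is verifying that the two summands really do live in orthogonal coordinate blocks so that ranks add; once the block structure is in place this is immediate from inspecting the sparsity pattern. Everything else is routine linear algebra.
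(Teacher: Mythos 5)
Your proof is correct and follows essentially the same route as the paper: the upper bound $\mathrm{rank}(\rmA_a)\le d_h$ via the rank-of-a-product inequality applied to the factor $\rmW_Q\rmW_K^\top$, and the lower bound via an explicit construction making $\rmX\rmW_Q\rmW_K^\top\rmX^\top$ and $\hat{\rmP}\hat{\rmP}^\top$ block-diagonal on disjoint coordinate blocks so their ranks add to $d_p+d_h$. The only (immaterial) difference is that you place the positional block in the top-left and the token block in the bottom-right, whereas the paper does the reverse; your write-up is in fact a bit cleaner about how the hypothesis $n\ge d_h+d_p$ is used.
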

\paragraph{Remarks.} This theorem shows us that the rank of attention matrices is constrained with the absolute position encodings at the input and using per-head position encodings by adding position information to attention matrix directly results in allowing for higher rank attention. See \S~\ref{appendix:proofs} for the proof.

Adding the position encodings directly to the input further places a constraint on training dynamics by forcing gradients to be same for both the input token and position embeddings (see \S~\ref{appendix:proofs}).  Relative position encodings discussed earlier, while addressing some of these concerns, suffer from slower training/inference times (see Table~\ref{tab:performance}) with complex implementations (\citet{shaw2018self, ke2020rethinking}). In the next section, we present simple position encoding methods that avoid these limitations.

\section{Proposed Position and Segment Encodings} \label{sec:method}
In the previous section, we learned about the limitations of input additive positional embeddings and existing works. Based on these observations, we propose two minimal/efficient ways to incorporate (absolute/relative) positional encodings along with a novel absolute segment encoding approach. By decoupling position and segment from token embeddings we match the SoTA performance while improving  training/inference time (see \S \ref{sec:cost}).

\begin{figure*}[t]
    \centering
    \subfloat[\centering 
    \dietabs{}]{{\includegraphics[scale=0.35]{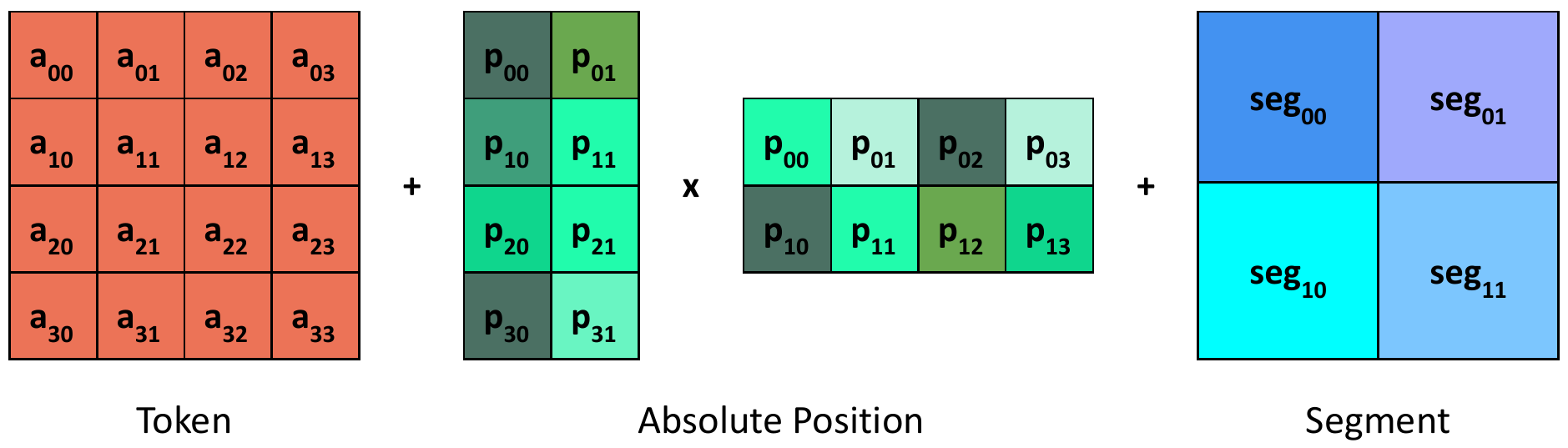} }}
    \qquad \qquad
    \subfloat[\centering 
    \dietrel{}]{{\includegraphics[scale=0.35]{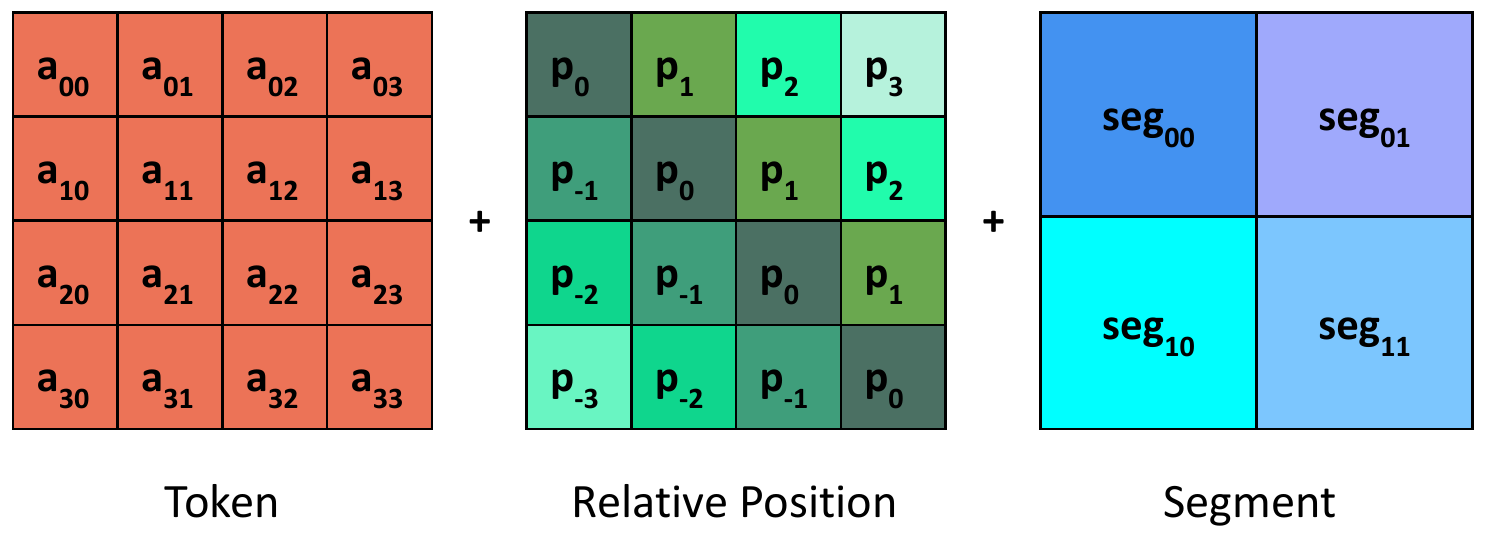} }}
    \caption{Proposed efficient approach to include position and segment encoding by adding them directly to the token attention matrix per-head. Left figure shows how we encode absolute positional attention. Right figure represents relative positional attention.}
    \label{fig:pe_attention}
\end{figure*}

\subsection{Decoupled Absolute Positional Attention}

We propose the following simple absolute position encoding method that adds position information to the token attention matrix directly in each attention head. We further also add segment information to the token attention instead of the input embeddings. This way we can set the rank of position encodings independently resulting in higher rank attention matrix, addressing the limitations discussed earlier.  
\paragraph{\absscalar}
\begin{align} 
\begin{split} 
 \rmA_{i,j}^{\text{ABS}} & = (\rmX_{i:} \rmW_Q) (\rmX_{j:} \rmW_K )^\top/ \sqrt{d} \\ 
 & + (\rmP_Q\rmP_K^\top)_{i, j} + E_S(S(i), S(j)),  \label{eq:absscalar}
\end{split}
\end{align}
where $\rmP_Q, \rmP_K \in \R^{n \times d_p}$ are low-rank position embedding matrices and $E_S$ is the absolute segment attention to model interactions between segments defined as 
\begin{equation} \begin{gathered} \label{eq:seg-def}
    E_S(S(i), S(j)) = \rmS_{\hat{i},\hat{j}}  \\
\text{ where }  {S}(i) = \hat{i} \text{ if index } i \text{ is in segment } \hat{i}.
\end{gathered} \end{equation}

Please note that we use the following notation in the above equation. $\rmA_{i, j}$ denotes the $(i, j)$ entry of matrix $\rmA$.  $\rmX_{i:}$  and $\rmX_{:j}$ denote the $i$th row and $j$th column of $\rmX$ respectively. We will follow this notation in the remainder of the paper. 

By default, we set $d_p$ same as $d_h$. This already results in potentially a rank $d_p+d_h$ attention matrix as shown in Theorem~\ref{thm:rank}. To illustrate this, we compare the rank of the attention matrices  in the first layer of a baseline BERT model and a \absscalar{} model for a sampled batch in Figure \ref{fig:bert_base_attention}. The figure shows that attention matrices of \absscalar{} have higher ranks than the baseline BERT. Our detailed experiment results in \S~\ref{sec:experiments} also show that \absscalar{} performs noticeably better. This confirms our earlier observation in Theorem~\ref{thm:rank} that additive position embeddings at input can constrain the model and adding the position embeddings per-head removes this constraint and results in better performance.

With the decoupled positional embedding, we can increase $d_p$ to any width $k$ to break the low-rank bottleneck shown in Theorem~\ref{thm:rank}. We call such model \absscalar{}-Rank-$k$.  We also address the efficiency issue introduced by one additional matrix multiplication ($\rmP_Q\rmP_K^\top$). As the positional embeddings are independent of the input, we only need to compute the matrix multiplication once for each training batch, and  we can cache the computed matrix before running inference. As a result, we observe neglectable training and inference cost increase in this model variant. 

\subsection{Decoupled Relative Positional Attention}

To incorporate relative position inductive bias, we consider a simplified version of the position encoding proposed in T5 \cite{raffel2020exploring} without log-binning and per-layer parameter sharing. We further also incorporate our per-head segment encoding as in \absscalar{}. The model can be written as:

\paragraph{\dietrel} 
\begin{align} 
\begin{split} 
\label{eq:minpe}
    \rmA_{i,j}^{\textrm{REL}} & = (\rmX_{i:} \rmW_Q) (\rmX_{j:} \rmW_K)^\top / \sqrt{d} \\
    & + \rmR_{i-j} + E_S(S(i), S(j)). 
\end{split}
\end{align}
We show an example of this model with two segments in Figure~\ref{fig:pe_attention}.

\subsection{Training and Inference Costs}
\label{sec:cost}
\insertPerformance

We next show the proposed models introduce little computational overhead compared to the baseline model, making our model more practical than alternatives. We consider two different models - $\BB$ model and a smaller model, $\BS$, that has hidden size 512, 4 layers and 8 attention heads.

In Table \ref{tab:performance} we compare the training and inference costs of position encoding methods of \citet{shaw2018self}, \citet{ke2020rethinking}, \dietabs{} and \dietrel{}. We notice that the simplicity of the proposed methods indeed translates to savings in both training and inference times compared to other position encoding approaches. The savings in step times are even more significant for smaller models ($\BS$) and during inference.

Note that the discrepancy between training and inference speed is likely because gradient updates dominate the cost at training time \cite{lan2020albert}. At inference time, we only measure the time of a forward pass which corresponds to costs of using such models in real systems.

\subsection{Application to Long-range Transformers}
Another advantage of our propose approaches is they easily extend to long range Transformer models. For long sequence inputs, Transformers suffer from quadratic dependence of computational complexity with respect to the sequence length. A class of methods reduce this complexity by using a low rank projection of the input sequence for attention computation \citep{wang2020linformer, choromanski2021rethinking,dai2020funneltransformer}. However, such methods use the default input position encodings, and there has not been much work in incorporating position information per-head without introducing the quadratic computation complexity on the input sequence length. We illustrate the applicability of our methods to such settings by applying \absscalar{} to Linformer \cite{wang2020linformer}, which projects the attention key and value matrices to a lower dimension $k$ during attention computation.

\paragraph{$\absscalar{}^{LIN}$} The proposed method can be written as:
\begin{align} 
\begin{split} 
 \rmA_{i,j}^{\text{LIN}} & = (\rmX_{i:} \rmW_Q) ((\rmE \rmX)_{j:} \rmW_K )^\top/ \sqrt{d} \\ 
 & + (\rmP_Q \rmP_K^\top)_{i, j}, 
 \label{eq:lin-scalar}
\end{split}
\end{align}
where $\rmE \in \R^{k \times n}\text{, }\rmP_Q \in \R^{n \times d}\text{, }\rmP_K \in \R^{k \times d}$.
\insertGLUE
\insertXTREMEall

\insertTranslate
\section{Experiments} \label{sec:experiments}

In this section, we present our experimental results comparing different position and segment encoding approaches discussed in earlier sections. We conduct experiments in three different settings to cover a wide range of use cases. First, we examine the results of a popular transfer learning approach from masked-LM pretraining to the end tasks in GLUE \citep{devlin2018bert}. Second, we study zero-shot cross-lingual transferability of the multilingual pretrained models \citep{icml2020_4220} to classification and question answering tasks in the XTREME benchmark \citep{icml2020_4220}. Lastly, we consider training Transformer models from scratch for machine translation. 

We compare the following positional encoding approaches - absolute positional embedding \citep{devlin2018bert}, relative positional embedding \citep{shaw2018self}, combined absolute and relative positional encoding \citep{ke2020rethinking}, relative scalar approach~\citep{raffel2020exploring}, our proposed \dietabs{} and \dietrel{} per-head positional encoding approaches. We denote the methods that add position/segment information directly to input token embeddings with \textit{input}, and methods that add position/segment information directly in attention layer with \textit{per-head}.  For complete experimental setup, see Appendix~\ref{appendix:experiments}.

\subsection{English Transfer Learning Results}
\paragraph{Datasets and Model} For pre-training, we use English Wikipedia and Books datasets \cite{devlin2018bert}. For Finetuning tasks we use the datasets from the GLUE benchmark \citep{wang2019glue}. We apply sub-word tokenization on raw text data using WordPiece \citep{wu2016googles} with a 30,000 token vocabulary.

\paragraph{Results}
We examine how different ways of encoding position and segment affect the transfer learning ability of the pre-trained English BERT models by fine-tuning on the GLUE benchmark \cite{wang2019glue}, and present the results in Table~\ref{tab:glue}. We first notice that all the approaches that encode position features explicitly at per-head level perform better than the baseline additive position encodings at the input \citep{devlin2018bert}. All models incorporating relative positions \cite{shaw2018self,raffel2020exploring,ke2020rethinking}, despite their modeling differences, have very similar average score. We show further gains (84.9 to 85.2 for \ours{}) by moving segment features to per-head. 

Interestingly we notice that the proposed absolute position encoding method \dietabs{}, with layer-wise sharing, is on par with all previous SoTA relative positional encodings. This shows that even absolute position encodings can perform better when included per-head instead at the input. We present a detailed ablation study varying the rank and sharing methods of absolute positional attention (\absscalar{}) in Table~\ref{tab:glue_ablation_model} and Tables~\ref{tab:glue_ablation_sharing} in Appendix~\ref{appendix:visualization}.

For long range input, we consider Linformer \cite{wang2020linformer} with a projection dimension of 32. Due to down-projection, we see non-trivial performance drop, when compared to a Transformer. Even for this setting we see that our absolute positional attention \dietabs{} can be used to improve the model's performance. 

\subsection{Cross-lingual Model Results}
\paragraph{Datasets and Model} 
For our multilingual experiments, we pre-train the models on Wikipedia corpus in 100 languages similar to \cite{lample2019crosslingual} for 125K steps with a sequence length of 512, and then fine-tune on downstream XTREME tasks \cite{icml2020_4220}. We use language-independent tokenizer, Sentence Piece \citep{kudo2018sentencepiece} model, with 120,000 token vocabulary to encode input text.


\paragraph{Classification} We conduct 5 trials of fine-tuning for each model on the MultiNLI \cite{williams2018broadcoverage} training data, then perform zero-shot predictions on XNLI \cite{conneau2018xnli}, choosing median accuracy to report. 

\paragraph{Question Answering} We conduct 5 trials of fine-tuning for each model on SQuAD V1.1 dataset, following by zero-shot predictions on XQuAD (11 languages), MLQA (7 languages) and TyDiQA-GoldP (9 languages), choosing median F1 / EM scores to report. 

\paragraph{Results}
We present our results on the classification and question answering finetuning tasks  in XTREME for different position and segment encoding methods in Table~\ref{tab:xtreme}. Again all per-head position encoding methods outperform input additive position encodings. Interestingly, our simple \dietabs{} turns out to be the best model, better than other models using relative position features.  Layer-wise sharing and per-head segment attention allows \dietabs{} to outperform \dietrel{}. We present a detailed ablation study in Table~\ref{tab:xtreme-scalar-sharing} to understand effect of decoupled positional attention variants. Finally, we notice similar advantages in using \dietabs{} with the Linformer \citep{wang2020linformer} model in the long range setting.

\insertXTREMEscalarsharing
\insertParameters

\subsection{Translation Results}

\paragraph{Datasets and Model} 
For the machine translation task we consider two language pairs (both directions) for training - WMT 2018 English-to-German (en-de), German-to-English (de-en), English-to-Czech (en-cs) and Czech-to-English (cs-en) \citep{bojar-etal-2018-findings}.  We test the corresponding models on Newstest 2018 datasets respectively and report the BLEU score output by SacreBLEU \citep{Post2018ACF} with default setting. Our setup follows \citet{vaswani2017attention} closely and use their Tensor2Tensor framework \citep{vaswani2018tensor2tensor}. Following \citet{vaswani2017attention} we use a 6 layer Transformer with encoder-decoder architecture. For more details of our experimental setup please see Appendix~\ref{appendix:experiments}

\paragraph{Results}
We report the BLEU scores of the models in Table \ref{tab:translate}. We observe that moving positional information from input to per-head attention layer improves BLEU scores. Different variations of per-head positional attention do not make much difference with \ours{} being competitive with \citet{shaw2018self}.

\subsection{Ablation Study}
In this section, we share our findings of key factors that affect performance of decoupled positional attention.
\paragraph{Sharing the Positional Encoding}
Previous works \cite{raffel2020exploring, ke2020rethinking, shaw2018self} used different sharing methods for the positional encodings to reduce the model parameters. We present a detailed study on different forms of sharing positional encodings and its effect on performance. In particular, we compare the following variations in sharing the position encoding parameters across different heads and the layers in the Transformer.
\begin{itemize}
    \item head-wise - Same parameters are used for all heads in a layer, with different layers using different parameters \cite{shaw2018self, ke2020rethinking}.
    \item layer-wise - Sharing of position encoding parameters across layers with different parameters for each head \citep{raffel2020exploring}. 
    \item none - Every layer and head  uses different position encoding parameters.
    \end{itemize}
We present results comparing different sharing methods in Table~\ref{tab:xtreme-scalar-sharing} for XTREME tasks. We make the following observations 1) head-wise sharing is consistently worse than layer-wise, 2) sharing hurts the performance of \dietrel{} whereas it improves the performance of \dietabs{}. We summarize the key settings along with the number of model parameters in Table \ref{tab:parameters}. For \dietrel{}, sharing brings little effect on saving parameters, and hurts the performance. Hence, we recommend no sharing for relative positional encodings (\dietrel{}). On the other hand, it is necessary to share parameters for \dietabs{} in order to keep the number of parameters low. Interestingly, sharing has regularization effect on \dietabs{}, making the model perform better. We choose layer-wise sharing over head-wise sharing for its better performance.

\paragraph{Segment Encoding}
Our novel segment encoding design further improves the model performance showed in Table~\ref{tab:xtreme-scalar-sharing}. Both relative and absolute decoupled positional attention models benefit from moving the segment encoding from input to per-head: \dietrel{} (+0.4\%), layer-wise shared \dietrel{} (+0.1\%), \dietabs{} (+0.2\%), layer-wise shared \dietabs{} (+0.1\%). See Appendix~\ref{appendix:glue} for the results of GLUE benchmark and Appendix~\ref{appendix:visualization} for segment attention visualization.

\paragraph{Rank of Absolute Positional Attention}
The design of \dietabs{} allows to learn higher rank attention matrices as shown in Theorem~\ref{thm:rank}. To understand the effect of absolute positional attention rank ($\it{d_p}$) in practice, we conduct experiments varying the rank from $d_p=64$ to $d_p=512$. We present the results in Table \ref{tab:xtreme-scalar-sharing}. We notice that the performance improves as we increase the rank from 64 to 128. However there is a performance saturation in further increasing it to 512. We present a visualization of the rank of the positional attention matrix  in Appendix~\ref{appendix:proofs}.
\insertDietAbsViz

\subsection{Positional Attention Pattern Visualization} \label{sec:visualization}
We next visualize the learned positional attention patterns of \dietabs{} in Figure~\ref{fig:attend_to_cls}. We first note that \dietabs{} has learned to capture the relative positional relations between inputs. Also note that, for the the index zero (the [CLS] token), decoupled absolute positional attention usually learns a special pattern. This pattern cannot be solely modeled by existing relative positional embedding methods, and some existing works~\citep{ke2020rethinking} handled this case specifically by introducing new parameters.  This shows the benefit of \dietabs{} in not requiring any carefully designed inductive biases as in existing approaches( \citet{shaw2018self, raffel2020exploring}), which may not generalize across tasks.

\section{Conclusion}
In this paper we theoretically and empirically examined the limitation of additive position embedding at input and showed that having per-head position embeddings results in better performance. We argued that the superior performance of some of the relative position encoding methods come from their per-head addition to attention matrix rather than the position information being relative vs absolute. Indeed we show that using absolute position encodings per-head results in better performance. Motivated by this we propose a simple per-head position and segment attention method that achieves the state-of-the-art performance on multiple NLP tasks and is more computationally efficient than existing approaches. 
\bibliography{anthology,custom}
\bibliographystyle{acl_natbib}

\appendix
\onecolumn
\section{Experimental setup} \label{appendix:experiments}

In this section we present more details of our experimental setup.

\paragraph{Pre-training} We pre-train the models using a masked LM task \citep{devlin2018bert} and do not use the Next Sentence Prediction (NSP) loss as suggested in RoBERTa \citep{roberta2019}. Each input is constructed with full sentences from documents, and packed up to the maximum sequence length. We use the same architecture as $\BB$ \citep{devlin2018bert} ($L$ = 12, $H$ = 768, $A$ = 12) for our experiments. 

\paragraph{Fine-tuning} Some downstream tasks have different groups of full sentences provided at inputs. For those tasks (e.g. MNLI, CoLA, XNLI, SQuAQ), we fine-tune models with supplemental segment encoding discussed in Section \S \ref{sec:method}. We leave models for other tasks unchanged as their pre-training correspondences.

\paragraph{Hyper-parameters} Hyper-parameters we use are presented in Table~\ref{tab:hyperparameters}.
\begin{table}[H]
\small
    \begin{center}
    \begin{tabular}{l | rr | rr}
      \toprule
      & \multicolumn{2}{c|}{\bf English} & \multicolumn{2}{c}{\bf Multilingual} \\
      & \multicolumn{1}{r}{Pretrain} & \multicolumn{1}{r|}{Finetune} & \multicolumn{1}{r}{Pretrain} & \multicolumn{1}{r}{Finetune} \\
      \midrule
      Max Steps & 500K &5 or 10 epochs & 125K & 3 epochs \\
      Learning Rate &  0.0018 & \{1e-5, 2e-5, 3e-5, 4e-5\} & 0.0018 & \{1e-5, 2e-5, 3e-5, 4e-5\} \\
      Warmup Proportion &  0.025 & 0.1 & 0.025 & 0.1 \\
      Sequence Length &  128 & 128 & 512 & 512 \\
      Batch Size &  4096 & 32 & 4096 & 32 \\
      Checkpoint Interval & 20k & 3.5k & 20k & 3.5k \\
      \bottomrule
    \end{tabular}
    \end{center}
  \vspace{-0.3cm}
    \caption{Hyperparameters for all models}
    \label{tab:hyperparameters}
\end{table}

\paragraph{Translate} For our Translate experiments we follow the setup of \citet{vaswani2017attention} and use their Tensor2Tensor framework \citep{vaswani2018tensor2tensor}. We train using WMT18 ((Europarl v7, Common Crawl corpus and News Commentary v13) en-de, de-en, en-cs and cs-en datasets. We report BLUE scores provided by SacreBLEU \citep{Post2018ACF} on newstest 2018 dataset. We train a 6 layer Transformer model. Any changes to position encoding are applied to all the attention layers both in the encoder and decoder. We use Adam optimizer and train for 250k steps. For decoding we use beam search with beam size 10 and length penalty 0.6.

\section{Proofs} \label{appendix:proofs}
\begin{proof}[Proof of Theorem~\ref{thm:rank}]
The first claim follows easily by observing that rank of product of an two matrices is upper bounded by the minimum of the individual ranks.
\begin{align*}
    rank(\rmA_a) &= rank((\rmX+\rmP)  \rmW_Q  \rmW_K^\top (\rmX+\rmP)^\top) \\
    &\le \min(rank(\rmX+\rmP), rank(\rmW_Q), rank(\rmX+\rmP), rank(\rmW_K)) \\
    &\le d_h.
\end{align*} 
\begin{align*}
    rank((\rmX+\rmP)  \rmW_Q  \rmW_K^\top (\rmX+\rmP)^\top) \le d_h, \text{where } \rmW_Q, \rmW_K \in \R^{d \times d_h}
\end{align*} 

The last inequality follows from $rank(\rmW_Q) \leq d_h$ as $\rmW_Q \in \R^{d \times d_h}$.

To prove the second claim we follow a construction approach. Let us first take $\rmW_Q = \rmW_K$ to be same matrices with first $d_h$ rows being identity matrix and the remaining $d-d_h$ rows being all zeros. Then $$\rmW_Q \rmW_K^\top = {
                \left( 
                      \begin{array}{cc}
                             I_{d_h, d_h} & 0_{d_h, d-d_h}\\
                             0_{d-d_h, d_h} & 0_{d-d_h, d-d_h}
                      \end{array}
                \right).
              }$$
Here $I_{d_h, d_h}$ denotes the identity matrix in $\R^{d_h \times d_h}$ and $0_{d_h, d}$ denotes the all zeros matrix in $\R^{d_h, d}$. 

We let $\rmX$ be such that the first $d$ rows form an identity matrix and rest are zeros - $\rmX^\top = [I_{d, d} , 0_{n-d, d}]$.
Hence $\rmX \rmW_Q \rmW_K^\top  X^\top $ becomes a similar diagonal matrix with $$ \rmX \rmW_Q \rmW_K^\top  \rmX^\top  = {
                \left( 
                      \begin{array}{cc}
                             I_{d_h, d_h} & 0_{d_h, n-d_h}\\
                             0_{n-d_h, d_h} & 0_{n-d_h, n-d_h}
                      \end{array}
                \right).
              }$$

Choose $d_p = n > d_h$ and let $\hat{\rmP} = I$. Now chosing $\hat{\rmP}$ with zeros in the first $n-d_p$ columns and identity in the last $d_p$ columns ($\hat{\rmP} = [0_{d,n-d_p}, I_{d_p, d_p}]$) gives $$ \hat{\rmP} \hat{\rmP}^\top = {
                \left( 
                      \begin{array}{cc}
                             0_{n-d_p, n-d_p} & 0_{n-d_p, d_p}\\
                             0_{d_p, n-d_p} & I_{d_p, d_p}
                      \end{array}
                \right).
              }$$
Combining these two gives us \begin{align*}
    rank(\rmA_r) &= rank(\rmX \rmW_Q \rmW_K^\top \rmX^\top + \hat{\rmP} \hat{\rmP}^\top) \\
    & = min(d_h + d_p, n)  > d_h.
\end{align*}
\end{proof}

Let $\rmX \in \R^{n \times d}$ be the input word embeddings in dimension $d$ with sequence length $n$. We have trainable position embeddings $\rmP \in \R^{n \times d}$, which are added to the input sequence before feeding into the model $g$. For a given input $\rmX$ and label $y$, the objective for a loss function $\ell$ is as follows: \begin{align} \label{eq:objective}
    L = \ell\left( g(\rmX + \rmP), y \right)
\end{align}

\begin{theorem}\label{thm:gradients}
Let $\rmX$ and $\rmP$ be trainable embedding matrices in $\R^{n \times d}$. Then the gradients of the loss function in equation \eqref{eq:objective}, at any point $(\rmX, y)$, and for any differentiable functions $\ell$ and $g$, are same for $\rmX$ and $\rmP$.
\end{theorem}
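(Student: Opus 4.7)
The plan is to apply the multivariate chain rule to the composition $L = \ell(g(\rmX + \rmP), y)$, viewing $\rmX + \rmP$ as an intermediate variable $\rmZ \in \R^{n \times d}$. First I would introduce $\rmZ := \rmX + \rmP$ and rewrite the loss as $L = \ell(g(\rmZ), y)$, so that $L$ depends on $\rmX$ and $\rmP$ only through the single matrix $\rmZ$. Since $\ell$ and $g$ are assumed differentiable, the composition $\rmZ \mapsto \ell(g(\rmZ), y)$ is differentiable with some well-defined matrix gradient $\nabla_{\rmZ} L \in \R^{n \times d}$, and the explicit form of $g$ or $\ell$ is irrelevant to the argument.

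Next I would compute the Jacobian of the map $\rmZ$ with respect to $\rmX$ and with respect to $\rmP$. Entry-wise, $\rmZ_{ij} = \rmX_{ij} + \rmP_{ij}$, so $\partial \rmZ_{ij}/\partial \rmX_{kl} = \delta_{ik}\delta_{jl} = \partial \rmZ_{ij}/\partial \rmP_{kl}$, i.e.\ both Jacobians equal the identity. The chain rule then gives
\begin{equation*}
\frac{\partial L}{\partial \rmX_{kl}} \;=\; \sum_{i,j}\frac{\partial L}{\partial \rmZ_{ij}}\cdot\frac{\partial \rmZ_{ij}}{\partial \rmX_{kl}} \;=\; \frac{\partial L}{\partial \rmZ_{kl}} \;=\; \sum_{i,j}\frac{\partial L}{\partial \rmZ_{ij}}\cdot\frac{\partial \rmZ_{ij}}{\partial \rmP_{kl}} \;=\; \frac{\partial L}{\partial \rmP_{kl}},
\end{equation*}
and assembling entry-wise yields $\nabla_{\rmX} L = \nabla_{\rmZ} L = \nabla_{\rmP} L$ at every point $(\rmX, \rmP)$, for any choice of differentiable $\ell$ and $g$.

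There is no real obstacle here beyond bookkeeping; the statement is essentially the observation that translation-by-$\rmP$ is an isometry of the parameter space whose derivative with respect to either summand is the identity. I would close by remarking on the consequence for training: under any first-order optimizer that uses only these gradients (e.g.\ plain SGD), if $\rmX$ and $\rmP$ are initialized to the same value they remain identical throughout training, and more generally the update directions applied to the token and position embeddings are forced to coincide. This is what the paper means by ``forcing gradients to be the same,'' and it motivates moving $\rmP$ out of the input sum and into the attention matrix per head as in \dietabs{} and \dietrel{}.
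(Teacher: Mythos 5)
Your proof is correct and follows essentially the same route as the paper's: both apply the chain rule to $L = \ell(g(\rmX+\rmP), y)$ and observe that the derivative of $\rmX+\rmP$ with respect to either summand is the identity, so the two gradients coincide. Your entry-wise Jacobian bookkeeping is just a more explicit rendering of the paper's one-line chain-rule computation, and the closing remark about identical update directions matches the paper's own interpretation of the result.
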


\noindent \paragraph{Remarks.} This theorem shows us that the gradients are same for the input token embeddings and position embeddings. While in standard NLP tasks the inputs $X$ can be different in each step due to different input tokens being present in each mini batch, the result still suggests that additive position embedding can limit the model from learning the relative importance of position encodings with respect to token embeddings based on the training task at hand. 

\begin{proof}[Proof of Theorem~\ref{thm:gradients}]
The above theorem follows by just computing the gradients and showing they are equal for each step.

Gradients of the above objective w.r.t $\rmX$ and $\rmP$ are as follows. \begin{align*}
    &\nabla_{\rmX} L = \nabla_{g} L \cdot \nabla_{\rmX + \rmP} g \cdot \nabla_{\rmX} (\rmX + \rmP) \\
    & = \nabla_{g} L \cdot \nabla_{\rmX + \rmP} g \\
    &\nabla_{\rmP} L = \nabla_{g} L \cdot \nabla_{\rmX + \rmP} g \cdot \nabla_{\rmP} (\rmX + \rmP) \\
    & = \nabla_{g} L \cdot \nabla_{\rmX + \rmP} g.
\end{align*}
The above computation of gradient follows from chain rule. This shows that the gradients of $L$ w.r.t. $\rmX$ and $\rmP$ are the same.
\end{proof}

\pagebreak

\section{Attention Visualization} \label{appendix:visualization}
In this section, we examine the model internals to understand how the proposed model works. We first visualize the model internals of different modeling alternatives to argue our proposed model is sensible.

\paragraph{Why We Remove the Input Embedding}
To understand if it is sensible to remove the input additive embedding after adding position scalars per-head, we add additive position embedding to our \absscalar{} model. Then, we examine the position embedding of the BERT model and our \absscalar{} variant with additive position embedding. 
Figure~\ref{fig:position_embedding_inner_prod} shows that, when the model has both absolute scalar and  additive absolute position embedding, the position embedding encodes almost no information --- all position embeddings at input are similar.

\begin{figure}[H]
\centering
  \includegraphics[scale=0.4]{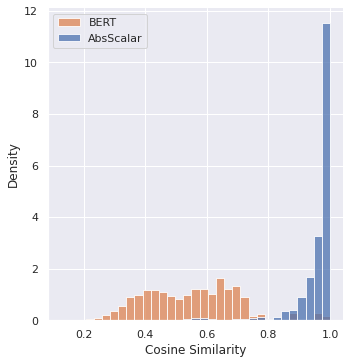}
\caption{The cosine similarity distribution between all absolute position pairs of the input additive positional embedding for the baseline BERT model and the proposed \absscalar{}. We observed that, after the position features are added to each head as in \absscalar{}, the input position embedding contains almost no information  --- all input position pairs are similar.}
\label{fig:position_embedding_inner_prod}
\end{figure}

\paragraph{The Effect of Segment Attention}
We also examine the effect of adding segment attention on top of the position attention. Figure~\ref{fig:pos_seg_visualization} shows some representative patterns. We observe that segment attention enables the model to attend more to parts of the sequence that belongs to certain segments.

\begin{figure}[H]
\centering
\begin{subfigure}{.5\textwidth}
  \centering
\includegraphics[width=6.5cm]{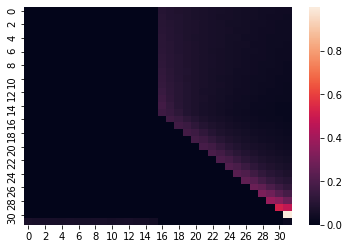}
\caption{Attend to the Second Segment}
\label{fig:shaw_att_scales_v2}
\end{subfigure}%
\begin{subfigure}{.5\textwidth}
  \centering
\includegraphics[width=6.5cm]{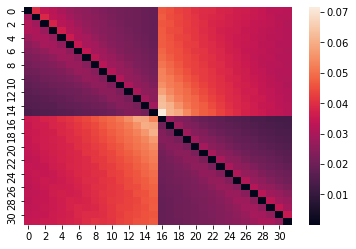}
\caption{Down-weight Relative Position Attention}
\end{subfigure}
\caption{We consider input of length 32 with two segments. The second segment starts at index 16. We observe the attention patterns in the \ours{} model without token-to-token attention.}
\label{fig:pos_seg_visualization}
\end{figure}

\pagebreak

\paragraph{Shifting Pattern Learned from Absolute Positional Attention}
Using relative position encoding gives generally better results despite smaller improvement scale compared to moving feature encoding per-head. To understand this, we visualize the attention pattern of the absolute positional attention and found two representative patterns in \dietabs{} in Figure~\ref{fig:abs_scalar_attention_viz}. We observe that even given absolute position features, the model learns a ``shifting pattern'' for the most part. Different from \citet{wang2020position} which claimed absolute position only learns local patterns, we show the position attention can actually attend to longer context. However, the shifting pattern can be modeled directly by relative position. Thus, \dietrel{} can be a better model choice with fewer parameters and more accurate inductive bias in some applications. 

\begin{figure}[H]
\centering
\begin{subfigure}{.5\textwidth}
  \centering
\includegraphics[width=6.5cm]{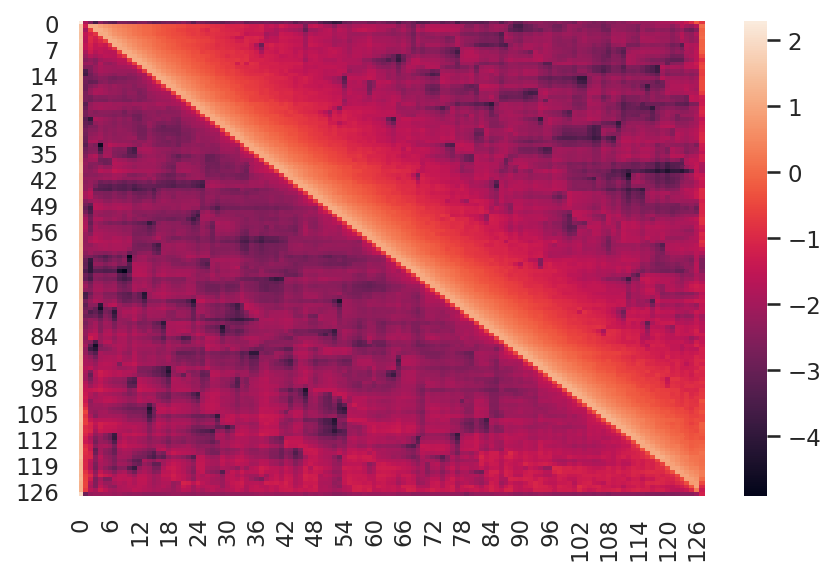}
\caption{Attend to Forward Neighbors}
\label{fig:shaw_att_scales_v3}
\end{subfigure}%
\begin{subfigure}{.5\textwidth}
  \centering
\includegraphics[width=6.5cm]{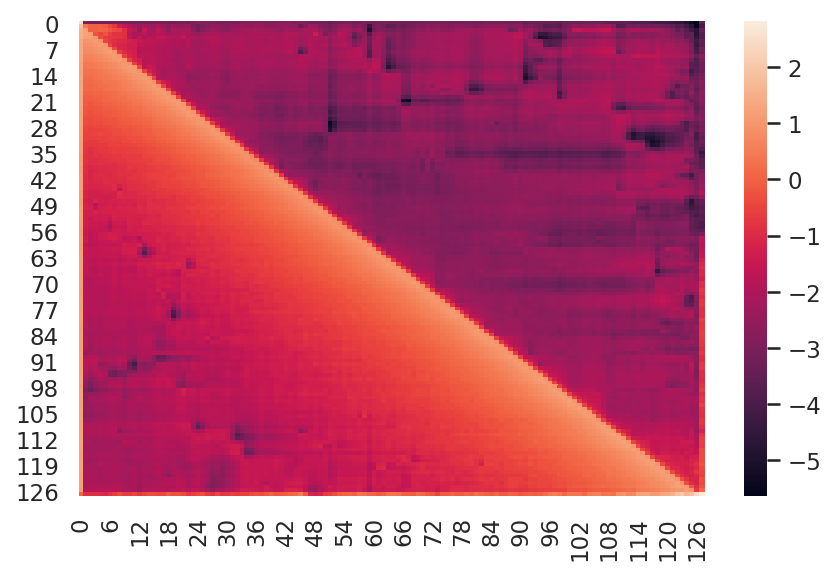}
\caption{Attend to Previous Tokens}
\end{subfigure}

\caption{Sampled position attention score patterns for the \absscalar{} model. We can see a clear shifting patterns generated by the model. Such patterns can be modeled better by relative positional scalar encodigs.}
\label{fig:abs_scalar_attention_viz}
\vspace{-0.1in}
\end{figure}

\paragraph{Rank of Positional Attention Matrices}
In Figure~\ref{fig:pos-att-rank}, we present a comparison of rank of position attention matrices for a $\BB$ model with absolute position embeddings at input ($\rmP_Q \rmW_Q \rmW_K^\top \rmP_K^\top$) v.s. absolute position embeddings per-head (\absscalar{}~\eqref{eq:absscalar}, ($\rmP_Q \rmP_K^\top$),  where $\rmP_Q, \rmP_K \in \R^{n \times d_p})$. With additive positional embedding at input, position attention matrices have much lower rank, limiting the representative power. This is alleviated by \dietabs{}.
\begin{figure}[H]
    \centering
    \includegraphics[width=7.5cm]{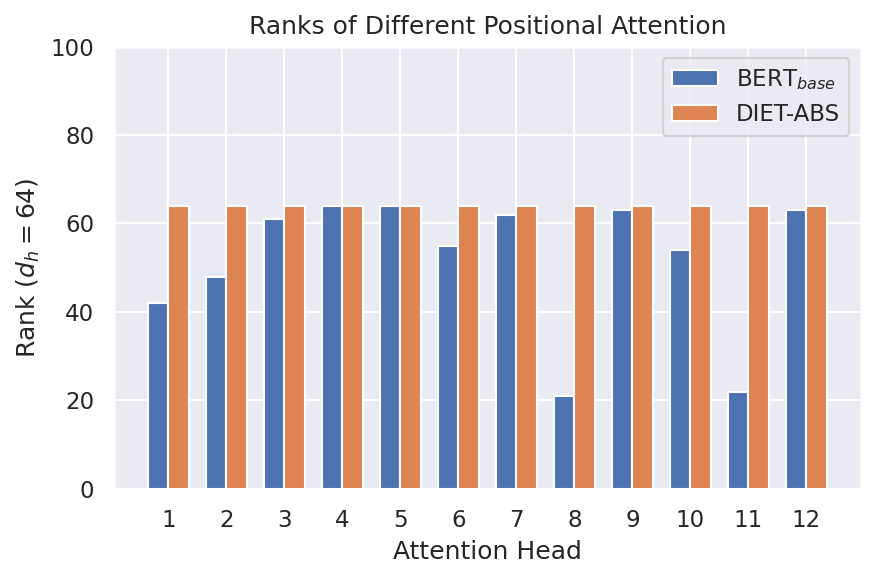}
    \caption{Rank of positional attention matrices}
    \label{fig:pos-att-rank}
\end{figure}

\pagebreak

\section{Additional Ablation Study on GLUE} \label{appendix:glue}
Earlier we present an ablation study on XTREME in Table~\ref{tab:xtreme-scalar-sharing} for decoupled positional attention variants. We compare \dietrel{} and \dietabs{} against the baseline~\citep{devlin2018bert}. We now present a similar study on the GLUE benchmark in Table~\ref{tab:glue_ablation_model} and observe similar results. 

\paragraph{Positional Encoding}
In Table~\ref{tab:glue_ablation_model}, moving positional embeddings from input to per-head improves average score for both \dietrel{} (+0.1\%) and \dietabs{} (+0.2\%).

\paragraph{Segment Encoding}
In Table~\ref{tab:glue_ablation_model}, moving segment embeddings from input to per-head improves both \dietrel{} (+0.3\%) and \dietabs{} (+0.05\%).

\paragraph{Sharing Strategies}
Sharing plays an important role for \dietabs{}. In Table~\ref{tab:glue_ablation_sharing}, we find that sharing will degrade the performance of \dietrel{} (-0.2\% layer-wise, -0.3\% head-wise). For \dietabs{}, sharing makes the model more stable, and able to compete with \dietrel{}.

\begin{table}[H]
\small
    \begin{center}
        \begin{tabular}{l | c | c | cccccc | c}
            \toprule
            
             \multirow{2}{*}{Model} &
             \multirow{2}{*}{Position} &
             \multirow{2}{*}{Segment} &
             \bf MNLI & \bf QQP & \bf QNLI & \bf SST2 & \bf CoLA &\bf STS-B & \bf \multirow{2}{*}{Avg} \\
             &  & & 393k & 364k & 105k & 67k & 8.5k & 7k \\
            \midrule
            \citet{devlin2018bert} & input & input & 85.8 / 85.9 & 91.1 & 89.9 & 93.2 & 58.7 & 89.0 & 84.8 \\
            \dietrel  &  per-head & input & 86.0 / 86.1 & 91.0 & 89.8  & 92.8  & 59.6 & 89.0 & 84.9 \\
            \dietrel &  per-head & per-head & 86.3 / 86.3 & 91.0 & 90.5 & 92.9 & 60.3& 89.3 & 85.2 \\
            \dietabs{} ($d_p$=64) &  per-head & input & 86.1 / 85.8 & 91.2 & 90.0 &  93.0 & 58.9 & 89.9 & 85.0 \\
            \dietabs{} ($d_p$=64) &  per-head & per-head & 86.1 / 86.1 & 91.2 & 90.2 & 93.0 & 58.9 & 89.8 & 85.0 \\
            \dietabs{} ($d_p$=64, share) & per-head & per-head & 86 / 86.8 & 91.1 & 90.4 & 92.9 & 59.3 & 89.8 &  85.2 \\
            \dietabs{} ($d_p$=128, share) & per-head & per-head & 86.4 / 86.4 & 90.8	& 89.5	& 93.0	& 59.8	& 90.2 & 85.2 \\ 
            \bottomrule
        \end{tabular}%
        \caption{
        Position and segment ablation study on GLUE: \dietrel{} and \dietabs{} demonstrate the advantages of moving both positional and segment embedding from input to per-head.
        }
    \label{tab:glue_ablation_model}
    \end{center}
\end{table}

\begin{table}[H]
\small
    \begin{center}
        \begin{tabular}{l | c |  cccccc | c}
            \toprule
             \multirow{2}{*}{Model} &
             \multirow{2}{*}{Sharing} &
             \bf MNLI & \bf QQP & \bf QNLI & \bf SST2 & \bf CoLA &\bf STS-B & \bf \multirow{2}{*}{Avg} \\
             & & 393k & 364k & 105k & 67k & 8.5k & 7k \\
            \midrule
            \dietrel{} & - & 86.3 / 86.3 & 91.0 & 90.5 & 92.9 & 60.3& 89.3 & 85.2 \\
            \dietrel{} & layer-wise &  86.5 / 86.3 & 91.1 & 90.0 & 93.0 & 58.8 & 89.6 & 85.0\\
            \dietrel{} & head-wise & 85.8 /  85.7& 91.2 & 90.2 & 92.8 & 59.8 & 89.1 & 84.9\\
            \dietabs{} ($d_p$=64) & - & 86.1 / 86.1 & 91.2 & 90.2 & 93.0 & 58.9 & 89.8 & 85.0 \\
            \dietabs{} ($d_p$=128) & - & 86.7 / 86.5  &  91.2 &  90.6 & 92.8  & 60.1 & 89.4 & 85.3 \\
            \dietabs{} ($d_p$=64) & layer-wise & 86 / 86.8 & 91.1 & 90.4 & 92.9 & 59.3 & 89.8 &  85.2 \\
            \dietabs{} ($d_p$=128) & layer-wise & 86.4 / 86.4 & 90.8	& 89.5	& 93.0	& 59.8	& 90.2 & 85.2 \\ 
            \bottomrule
        \end{tabular}%
        \caption{Sharing ablation study on GLUE: We run ablation study to understand the effect of sharing position encoding parameters across layers and heads. We notice that sharing improves the performance of \dietabs{}, but hurts the performance of \dietrel{} with both layer-wise or head-wise sharing. 
        }
    \label{tab:glue_ablation_sharing}
    \end{center}
\end{table}

\end{document}